\documentclass[12pt]{article}

\usepackage{amsthm,amsfonts,amsmath,amssymb,epsfig,color,float,graphicx,verbatim}
\usepackage{multirow}

\newtheorem{theorem}{Theorem}

\newtheorem{lemma}[theorem]{Lemma}
\newtheorem{proposition}[theorem]{Proposition}

\newtheorem{claim}[theorem]{Claim}

\theoremstyle{definition}
\newtheorem{definition}[theorem]{Definition}

\oddsidemargin=-0.1in \evensidemargin=-0.1in \topmargin=-.5in
\textheight=9in \textwidth=6.5in
\parindent=18pt

\newif\ifhyper\IfFileExists{hyperref.sty}{\hypertrue}{\hyperfalse}
\hypertrue
\ifhyper\usepackage{hyperref}\fi

\usepackage{enumitem}
\usepackage[capitalize]{cleveref} 

\author{
Ilias Diakonikolas\thanks{Supported by NSF Award CCF-1652862 (CAREER) and a Sloan Research Fellowship.}\\
University of Southern California\\
{\tt diakonik@usc.edu}\\
\and
Daniel M. Kane\thanks{Supported by NSF Award CCF-1553288 (CAREER) and a Sloan Research Fellowship.}\\
University of California, San Diego\\
{\tt dakane@cs.ucsd.edu}\\
\and
Alistair Stewart\\ 
University of Southern California\\
{\tt alistais@usc.edu}
}

\newcommand{\vol}{{\mathrm{vol}}}
\newcommand{\R}{\mathbb{R}}
\newcommand{\eps}{\epsilon}
\newcommand{\Z}{\mathbb{Z}}
\newcommand{\dtv}{d_{\mathrm TV}}
\newcommand{\eqdef}{\stackrel{{\mathrm {\footnotesize def}}}{=}}
\newcommand{\opt}{\mathrm{OPT}}
\newcommand{\E}{\mathbb{E}}

\title{Learning Multivariate Log-concave Distributions}

\begin{document}

\maketitle

\begin{abstract}
We study the problem of estimating multivariate log-concave probability density functions.
We prove the first sample complexity upper
bound for learning log-concave densities on $\R^d$, for all $d \geq 1$.
Prior to our work, no upper bound on the sample complexity of this learning problem 
was known for the case of $d>3$.

In more detail, we give an estimator that, for any $d \ge 1$ and $\eps>0$,
draws $\tilde{O}_d \left( (1/\eps)^{(d+5)/2} \right)$ samples from an unknown target log-concave density
on $\R^d$, and outputs a hypothesis that (with high probability) is $\eps$-close to the target, 
in total variation distance. 
Our upper bound on the sample complexity comes close to the known lower bound of
$\Omega_d \left( (1/\eps)^{(d+1)/2} \right)$ for this problem.
\end{abstract}

\section{Introduction}  \label{sec:intro}

\subsection{Background and Motivation} \label{ssec:background}

The estimation of a probability density function based on observed data  is a classical and paradigmatic problem
in statistics~\cite{Pearson} with a rich history (see, e.g.,~\cite{BBBB:72, DG85, Silverman:86,Scott:92,DL:01}).
This inference task is known as {\em density estimation} or {\em distribution learning} and
can be informally described as follows:
Given a set of samples from an unknown distribution $f$ that is believed to belong to (or be well-approximated by)
a given family ${\cal D}$, we want to output a hypothesis distribution $h$ that is a good approximation
to the target distribution $f$. 

The first and arguably most fundamental goal in density estimation is to characterize
the {\em sample complexity} of the problem in the minimax sense, i.e., the number of samples {\em inherently}
required to obtain a desired accuracy (in expectation or with high probability). 
In other words, for a given distribution family $\mathcal{D}$ and desired
accuracy $\eps>0$, we are interested in obtaining an estimator for $\mathcal{D}$ with a sample complexity
upper bound of $N = N(\mathcal{D}, \eps)$,
and an information-theoretic lower bound showing that {\em no} estimator
for $\mathcal{D}$ can achieve accuracy $\eps$
with fewer than $\Omega(N)$ samples.
The sample complexity of this unsupervised learning problem depends 
on the {\em structure} of the underlying family $\mathcal{D}.$
Perhaps surprisingly, while density estimation has been studied for several decades,
the sample complexity of learning is not yet well-understood
for various natural and fundamental distribution families.

We emphasize here that there is no known simple complexity measure of a distribution family $\mathcal{D}$
that {\em characterizes} the sample complexity of learning (an unknown distribution from) $\mathcal{D}$ under the total variation distance.
In contrast, the VC dimension of a concept class plays such a role in the PAC model of
learning Boolean functions (see, e.g.,~\cite{BEH+:89, KearnsVazirani:94}).

It should be noted that the classical information-theoretic quantity
of the metric entropy and its variants (e.g., bracketing entropy)~\cite{VWellner96, DL:01, Tsybakov08}\footnote{Roughly speaking, 
the metric entropy of a distribution family $\mathcal{D}$ is the logarithm of the size of the smallest $\eps$-cover of $\mathcal{D}.$ 
A subset ${\cal D}_{\eps} \subseteq {\cal D}$ in a metric space $({\cal D}, d)$ is said to be an \emph{$\eps$-cover of ${\cal D}$} 
with respect to the metric $d: \mathcal{X}^2 \to \R_+,$ if for every $\mathbf{x} \in {\cal D}$ there exists some $\mathbf{y} \in {\cal D}_{\eps}$ 
such that $d(\mathbf{x}, \mathbf{y}) \leq \eps.$ In this paper, we focus on the total variation distance between distributions.}, 
provide {\em upper bounds} on the sample complexity of distribution learning that are {\em not} tight in general. 
Specifically, such upper bounds are suboptimal -- both quantitatively and
qualitatively -- for various distributions families, see, e.g.,~\cite{DKS15} for a natural example.

There are two main strands of research in distribution learning.
The first one  
concerns the learnability of {\em high-dimensional parametric} distribution families
(e.g., mixtures of Gaussians). The sample complexity of learning parametric families
is typically polynomial in the dimension and the goal is to design computationally efficient algorithms.
The second strand -- which is the focus of this paper -- 
studies the learnability of {\em low-dimensional nonparametric} distribution families under various
assumptions on the shape of the underlying density. There has been a long line of work on this strand
within statistics since the 1950's and, more recently, in theoretical computer science 
(see Section~\ref{sec:related} for an overview of related work).
The majority of this literature has studied the univariate (one-dimensional) setting which is by now fairly well-understood
for a wide range of distributions. On the other hand, the {\em multivariate} setting and specifically 
the regime of {\em fixed dimension} is significantly more challenging and 
poorly understood for many natural distribution families. 

\subsection{Our Results and Comparison to Prior Work}  \label{ssec:results}

In this work, we study the problem of density estimation for the family of log-concave distributions on $\R^d$. A distribution on $\R^d$
is log-concave if the logarithm of its probability density function is a concave function (see Definition~\ref{def:lc}).
Log-concave distributions constitute a rich and attractive non-parametric family that is
particularly appealing for modeling and inference~\cite{Walther09}.
They encompass a range of interesting and well-studied distributions, including
uniform, normal, exponential, logistic, extreme value,
Laplace, Weibull, Gamma, Chi and Chi-Squared, and Beta distributions (see, e.g.,~\cite{BagnoliBergstrom05}).
Log-concave distributions have been studied in a range of different contexts including
economics \cite{An:95}, statistics and probability theory (see~\cite{SW14-survey} for a recent survey),
theoretical computer science~\cite{LV07}, and algebra, combinatorics and geometry~\cite{Stanley:89}.

The problem of density estimation for log-concave distributions is of central importance
in the area of non-parametric shape constrained inference. As such, 
this problem has received significant attention in the statistics literature,
see~\cite{Cule10a, DumbgenRufibach:09, DossW16, ChenSam13, KimSam16, BalDoss14, HW16} and references therein,
and, more recently, in theoretical computer science~\cite{CDSS13, 
CDSS14, ADLS17, AcharyaDK15, CanonneDGR16, DKS16-proper-lc}.
In Section~\ref{sec:related}, we provide a detailed summary of related work. 
In this subsection, we confine ourselves to describing the prior work
that is most relevant to the results of this paper.

We study the following fundamental question: 
\begin{center}
{\em How many samples are information-theoretically required
to learn an arbitrary \\ log-concave density on $\R^d$, up to total variation distance $\eps$?}
\end{center}
Despite significant amount of work on log-concave density estimation, 
our understanding of this question even for constant dimension $d$ remains surprisingly poor. 
The only prior work that addresses the $d>1$ case in the finite sample regime is~\cite{KimSam16}.
Specifically,~\cite{KimSam16} study this estimation problem 
with respect to the squared Hellinger distance and obtain the following results: 
\begin{itemize}
\item[(1)] an information-theoretic sample complexity lower bound of
$\Omega_d \left( (1/\eps)^{(d+1)/2} \right)$ for any $d \in \Z_+$, and 
\item[(2)] a sample complexity upper bound
that is tight (up to logarithmic factors) for $d \leq 3$. 
\end{itemize}
{\em Specifically, prior to our work, no finite sample complexity upper bound was known even for $d=4$.}

\medskip

In this paper, we obtain a sample complexity upper bound of $\tilde{O}_d \left( (1/\eps)^{(d+5)/2} \right)$,
for any $d \in \Z_+$, under the total variation distance.
By using the known relation between the total variation and squared Hellinger distances,
our sample complexity upper bound immediately implies the same upper bound under the squared Hellinger distance.
Moreover, the aforementioned lower bound of~\cite{KimSam16} also directly applies
to the total variation distance. Hence, our upper bound is tight up to an $\tilde{O}_d (\eps^{-2})$ multiplicative factor.

To formally state our results, we will need some terminology.

\medskip

\noindent {\bf Notation and Definitions.} Let $f: \R^d \to \R$ be a Lebesgue measurable function.
We will use $f(A)$ to denote $\int_{A} f(x) dx$.
A Lebesgue measurable function $f: \R^d \to \R$ is a probability density function (pdf)
if $f(x) \geq 0$ for all $x \in \R^d$ and  $\int_{\R^d} f(x) dx = 1$.
The {\em total variation distance} between two non-negative measures
$f, g: \R^d \to \R$ is defined as $\dtv(f, g) = \sup_{S} |f(S) - g(S)|$, where
the supremum is over all Lebesgue measurable subsets of the domain.
If $f, g: \R^d \to \R_+$ are probability density functions, then we have that
$\dtv\left(f, g \right) = (1/2) \cdot \| f -g  \|_1 = (1/2) \cdot \int_{\R^d} |f(x) - g(x)| dx.$


\begin{definition} \label{def:lc}
A probability density function $f : \R^d \to \R_+$, $d \in \Z_+$, is called {\em log-concave}
if there exists an upper semi-continuous concave function $\phi: \R^d \to [-\infty, \infty)$
such that $f(x) = e^{\phi(x)}$ for all $x \in \R^d$.
We will denote by $\mathcal{F}_d$ the set of upper semi-continuous,
log-concave densities with respect to the Lebesgue
measure on $\R^d$.
\end{definition}

We use the following definition of learning under the total variation distance.
We remark that our learning model incorporates adversarial model misspecification, and our proposed
estimators are robust in this sense.

\begin{definition}[Agnostic Distribution Learning]  \label{def:learning}
Let ${\cal D}$ be a family of probability density functions on $\R^d$.
A randomized algorithm $A^{\cal D}$ is an {\em agnostic distribution learning algorithm for $\cal D,$}
if for any $\eps>0,$ and any probability density function $f: \R^d \to \R_+$, on input $\eps$ and sample access to $f$,
with probability $9/10,$ algorithm $A^{\cal D}$
outputs a hypothesis density $h: \R^d \to \R_+$ such that
$\dtv(h, f) \leq O(\opt) + \eps$,
where $\opt \eqdef \inf_{g \in \mathcal{D}} \dtv(f, g)$.
\end{definition}

Our agnostic learning definition subsumes Huber's $\eps$-contamination model~\cite{Huber64}, 
which prescribes that the target distribution $f$ is of the form $(1-\eps) g + \eps r$, 
where $g \in {\cal D}$ and $r$ is some arbitrary distribution. 
The main result of this paper is the following theorem:

\begin{theorem}[Main Result] \label{thm:main}
There exists an agnostic learning algorithm for the family $\mathcal{F}_d$ of log-concave densities on $\R^d$ with the following
performance guarantee: For any $d \in \Z_+$, $\eps>0$, and any probability density function $f: \R^d \to \R_+$,
the algorithm draws $O(d/\eps)^{(d+5)/2} \log^2(1/\eps)$ samples from $f$ and, with probability at least $9/10$,
outputs a hypothesis  density $h: \R^d \to \R_+$ such that
$\dtv(h, f) \leq 3 \cdot \opt + \eps$, where $\opt \eqdef \inf_{g \in \mathcal{F}_d} \dtv(f, g)$.
\end{theorem}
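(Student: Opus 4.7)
The plan is the standard ``cover plus tournament'' approach. By the classical Scheff\'e-style hypothesis selection procedure (Devroye--Lugosi, Daskalakis--Kamath), given an $\eps$-cover $\mathcal{C} \subseteq \mathcal{F}_d$ of $\mathcal{F}_d$ in total variation with cardinality $N$, there is an algorithm drawing $O(\log N / \eps^2)$ samples that outputs $\hat h \in \mathcal{C}$ with $\dtv(\hat h, f) \le 3\,\opt + \eps$ with high probability, matching the agnostic guarantee in Theorem~\ref{thm:main}. To reach the announced sample complexity $\tilde{O}_d((d/\eps)^{(d+5)/2})$, it therefore suffices to exhibit a cover with $\log N = \tilde{O}_d((d/\eps)^{(d+1)/2})$.

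Before building the cover I would reduce to log-concave densities with effectively bounded support and bounded log-density. After a data-driven affine transformation putting $f$ in approximately isotropic position (the required empirical covariance is learnable from a negligible $\tilde{O}_d(1/\eps^2)$ samples), standard concentration for isotropic log-concave distributions places all but $\eps$ mass of $f$ in a Euclidean ball of radius $R = \tilde{O}(\sqrt{d \log(1/\eps)})$, and confines $\phi = \log f$ on this ball to an interval of length $\tilde{O}(d + \log(1/\eps))$. The affine transformation itself is then discretized on a $\poly(d/\eps)$-scale grid, contributing only $\tilde{O}_d(1)$ bits to the cover.

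The heart of the argument is constructing an $\eps$-cover of log-concave densities with such effective support. I would approximate $\phi$ by a concave piecewise-affine function over a simplicial partition of a polytope approximating the effective support. Two contributions determine the complexity of the partition: interior approximation of a bounded concave function by affine pieces, for which classical Gruber-type bounds show $\tilde{O}_d(\delta^{-d/2})$ pieces suffice to achieve uniform error $\delta$; and polytope approximation of the effective support boundary, which requires $\tilde{O}_d(\delta^{-(d-1)/2})$ facets. Taking $\delta = \Theta(\eps)$ and combining yields a partition of combinatorial complexity $\tilde{O}_d((d/\eps)^{(d+1)/2})$. Quantizing each simplex vertex and the $O(d)$ affine parameters of each piece on a $\poly(d/\eps)$ grid contributes $\tilde{O}_d(1)$ bits per piece, producing an $\eps$-cover of log-cardinality $\tilde{O}_d((d/\eps)^{(d+1)/2})$ as required.

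The main obstacle I anticipate is translating a sup-norm approximation of $\phi$ into a total variation approximation of $f = e^{\phi}$: exponentiation amplifies local $\phi$-errors in proportion to the density, and the boundary strip where $f$ decays sharply is delicate. I would split the analysis into a bulk region, where $f$ is bounded away from $0$ and a $\delta$-approximation of $\phi$ translates into an $O(\delta)$ contribution to $\dtv$, and a tail region of total mass at most $\eps$ where the $\dtv$ contribution is bounded trivially by the mass. Once the cover and the TV-approximation estimate are in place, the tournament step is an off-the-shelf invocation of known hypothesis-selection machinery and delivers the agnostic bound $\dtv(\hat h, f) \le 3\,\opt + \eps$ at the claimed sample complexity.
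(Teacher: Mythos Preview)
Your approach is genuinely different from the paper's. The paper does not construct an $\eps$-cover and does not run a tournament; it works directly with Yatracos/VC machinery. Concretely, the paper defines a class $\mathcal{A}_{d,\eps}$ of subsets of $\R^d$ (each of the form $\{x: g(x) \geq g'(x)\}$ for $g, g'$ in a family $\mathcal{C}_{d,\eps}$ of ``piecewise-level-set'' approximants), bounds its VC dimension by $\tilde{O}((d/\eps)^{(d+1)/2})$, and invokes the minimum-$\mathcal{A}$-distance estimator of Lemma~\ref{lem:generic-vc}, which outputs the $h \in \mathcal{F}_d$ minimizing $\|h - \widehat{f}_n\|_{\mathcal{A}_{d,\eps}}$. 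The structural lemma behind this is that any $f \in \mathcal{F}_d$ is $\eps$-close in $L_1$ to a function whose $\tilde{O}(1/\eps)$ superlevel sets are polytopes with $O((d/\eps)^{(d-1)/2})$ facets, proved by slicing $f$ into geometric level bands and applying the inscribed-polytope volume approximation of Theorem~\ref{thm:convex-approximation} to each convex level set.

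The key structural difference is that the paper's class $\mathcal{A}_{d,\eps}$ is built from Boolean combinations of halfspaces and is therefore affine-invariant: nothing in the argument requires $f$ (or its nearest log-concave $f^*$) to be isotropic, and the estimator optimizes over the full, non-compact family $\mathcal{F}_d$ directly. Your cover-based route, by contrast, \emph{must} reduce to a compact subclass, and this is where your plan has a real gap. In the agnostic model the samples come from an arbitrary density $f$, and even when $\opt = \dtv(f,f^*)$ is tiny, an $\opt$-fraction of outlier mass can push the empirical mean and covariance of $f$ arbitrarily far from those of $f^*$; your ``data-driven affine transformation putting $f$ in approximately isotropic position'' therefore need not put $f^*$ anywhere near isotropic, and the subsequent cover need not contain anything close to $f^*$. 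Repairing this requires robust location/scatter estimation, which is a nontrivial add-on, and even then recovering the specific constant $3$ in front of $\opt$ through a two-stage data-dependent pipeline takes additional care. A smaller point: your complexity bookkeeping is internally inconsistent --- you quote $\tilde{O}_d(\delta^{-d/2})$ interior pieces but then state the combined complexity as $\tilde{O}_d((d/\eps)^{(d+1)/2})$; the former would actually yield $\tilde{O}_d((1/\eps)^{d/2})$.
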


To the best of our knowledge, our estimator provides the {\em first} finite sample complexity guarantees for $\mathcal{F}_d$
for any $d>3$. With the exception of~\cite{KimSam16},
prior work on this problem that provides finite sample guarantees has been confined
to the $d=1$ case. As previously mentioned,~\cite{KimSam16} study the case of
general dimension $d$ focusing on the squared Hellinger distance.
Recall that the squared Hellinger distance is defined as
$h^2(f, g) \eqdef \int_{\R^d} (f^{1/2} - g^{1/2})^2 dx$
and that for any two densities $f, g$ it holds $h^2(f, g)  \leq \dtv(f, g) \leq h(f, g)$.
Therefore, the sample lower bound of~\cite{KimSam16} also holds under the total variation distance,
and our sample upper bound immediately applies under the squared Hellinger distance.
This implies that our upper bound is tight up to an $\tilde{O}_d (\eps^{-2})$ multiplicative factor.

Our proposed estimator establishing Theorem~\ref{thm:main} 
is robust to model misspecification with respect to the total variation distance. 
It should be noted that our estimator does not rely on maximum likelihood, as opposed to most of the statistics
literature on this problem. In contrast, our estimator relies on the VC inequality~\cite{VapnikChervonenkis:71, DL:01},
a classical result in empirical process theory (see Theorem~\ref{thm:vc-inequality}).
The VC inequality has been recently used~\cite{CDSS13, CDSS14, ADLS17}
to obtain sharp learning upper bounds for a wide range of {\em one-dimensional} distribution families,
including univariate log-concave densities. As far as we know, ours is the first use of the VC inequality
to obtain learning upper bounds for structured distributions in multiple dimensions. 

\medskip

\noindent {\bf Remark.} Despite its many desirable properties, the maximum likelihood estimator (MLE)
is known to be {\em non-robust} in Huber's contamination 
model\footnote{For log-concave densities, the MLE is known to be robust {\em in the limit} under {\em weaker} metrics~\cite{Dumbgen2011}.}. 
To address this downside, recent work in theoretical computer science~\cite{CDSS14, ADLS17} 
and statistics~\cite{Baraud16} has proposed alternative robust estimators. Moreover, for $4$-dimensional log-concave densities,
it has been conjectured (see, e.g.,~\cite{Wellner15}) that the MLE has suboptimal sample complexity {\em even without noise}.
These facts together provide strong motivation for the design and analysis of 
surrogate estimators with desirable properties, as we do in this work.




\subsection{Related Work} \label{sec:related}

The area of nonparametric density estimation under shape constraints
is a classical topic in statistics starting with the pioneering work of Grenander~\cite{Grenander:56}
on monotone distributions (see~\cite{BBBB:72} for an early and~\cite{GJ:14} for a recent book on the topic).
Various structural restrictions have been studied in the literature, starting from
monotonicity, unimodality, and concavity~\cite{Grenander:56, Brunk:58, PrakasaRao:69, Wegman:70, HansonP:76, Groeneboom:85, Birge:87, Birge:87b,
Fougeres:97, JW:09}. While the majority of the literature has focused on the univariate setting,
a number of works have studied nonparametric distribution families in fixed dimension, see, e.g.,~\cite{BD03, Seregin2010, KoenkerM:10aos}.

In recent years, there has been a body of work in computer science on nonparametric density estimation of
with a focus on both sample and computational 
efficiency~\cite{DDS12soda, DDS12stoc, DDOST13focs, CDSS13, CDSS14, CDSS14b, 
ADHLS15, ADLS17, DHS15, DKS15, DKS15b, DDKT15, DKS16, VV16}.

During the past decade, density estimation of log-concave densities has been extensively investigated.
A line of work in statistics~\cite{Cule10a, DumbgenRufibach:09, DossW16, ChenSam13, BalDoss14} has obtained
a complete understanding of the global consistency properties of the maximum likelihood estimator (MLE) for any dimension $d$.
In terms of finite sample bounds, the sample complexity of log-concave density estimation has been characterized for $d=1$, 
e.g., it is $\Theta(\eps^{-5/2})$ under the variation distance~\cite{DL:01}. 
Moreover, it is known~\cite{KimSam16, HW16} that the MLE is sample-efficient in the univariate setting.
For dimension $d >1$,~\cite{KimSam16} show that the MLE is nearly-sample optimal under the squared Hellinger distance
for $d \leq 3$, and also prove bracketing entropy lower bounds suggesting that the MLE may be sub-optimal for $d > 3$.

A recent line of work in theoretical computer science~\cite{CDSS13, CDSS14, ADLS17, AcharyaDK15, CanonneDGR16, DKS16-proper-lc}
studies the $d=1$ case and obtains sample and computationally efficient estimators under the total variation distance.
Specifically,~\cite{CDSS14, ADLS17} gave sample-optimal robust
estimators for log-concave distributions (among others) based on the VC inequality.

\subsection{Technical Overview} \label{sec:techniques}

In this subsection,
we provide a high-level overview of our techniques establishing Theorem~\ref{thm:main}.
Our approach is inspired by the framework introduced in~\cite{CDSS13, CDSS14}.
Given a family of structured distributions $\mathcal{D}$ that we want to learn,
we proceed as follows: We find an ``appropriately structured'' distribution
family $\mathcal{C}$ that approximates $\mathcal{D}$, in the sense that every density
in  $\mathcal{D}$ is $\eps$-close, in total variation distance, to a density in $\mathcal{C}$.
By choosing the family $\mathcal{C}$ appropriately, we can obtain (nearly-)tight
sample upper bounds for $\mathcal{D}$ from sample upper bounds for $\mathcal{C}$.
Our estimator to achieve this goal (see Lemma~\ref{lem:generic-vc}) leverages the VC inequality.

The aforementioned approach was used in~\cite{CDSS13, CDSS14, ADLS17}
to obtain sample-optimal (and computationally efficient) 
estimators for various {\em one-dimensional} structured distribution families.
In particular, for the family $\mathcal{F}_1$ of univariate log-concave densities,
~\cite{CDSS13} chooses $\mathcal{C}$ to be the family of densities that are piecewise-constant
with $\tilde{O}(1/\eps)$ interval pieces. Similarly,~\cite{CDSS14, ADLS17}
take $\mathcal{C}$ to be the family of densities that are piecewise linear with $\tilde{O}(\eps^{-1/2})$ interval pieces.

Our structural approximation result for the multivariate case can be viewed as an appropriate generalization of the above one-dimensional
results. Specifically, we show that any log-concave density $f$ on $\R^d$ can be $\eps$-approximated,
in total variation distance, by a function $g$ that is essentially defined by
$\tilde{O}_d \left( (1/\eps)^{(d+1)/2} \right)$ hyperplanes.
Once such an approximation has been established, roughly speaking, we exploit the fact that
families of sets defined by a small number of hyperplanes have small VC dimension.
This allows us to use the VC inequality to learn an approximation to $g$
(and, thus, an approximation to $f$) from an appropriate number of samples.
If $V$ is an upper bound on the VC dimension of the resulting set system, the number of samples needed
for this learning task will be $O(V/\eps^2)$.

To prove our structural approximation result for log-concave densities $f$ on $\R^d$ we proceed as follows:
First, we make use of concentration results for log-concave densities implying
that a negligible fraction of $f$'s probability mass comes from points at which $f$
is much smaller than its maximum value. This will allow us to approximate $f$ by a function
$h$ that takes only $\tilde O_d(1/\eps)$ distinct values. Furthermore, the superlevel
sets $h^{-1}([x,\infty))$ will be given by the corresponding superlevel sets for $f$, which are convex.
We then use results from convex geometry to approximate each of these convex sets
(with respect to volume) by inscribed polytopes
with $O_d \left( (1/\eps)^{(d-1)/2} \right)$ facets.
Applying this approximation to each superlevel set of $h$ gives us our function $g$.

We note that a number of constructions are possible here that differ in
exactly how the layers are constructed and what to do when they do or
do not overlap. Many of these constructions are either incorrect or difficult to analyze.
In this work, we provide a simple construction with a succinct proof
that yields a near-optimal sample complexity upper bound.  We believe that a more careful
structural approximation result may lead to the tight sample upper bound, 
and we leave this as an interesting question for future work.

\subsection{Organization}
In Section~\ref{sec:prelims}, we record the basic probabilistic and analytic ingredients we will require.
In Section~\ref{sec:result}, we prove our main result.
Finally, we conclude with a few open problems in Section~\ref{sec:concl}.

\section{Preliminaries} \label{sec:prelims}

\noindent {\bf The VC inequality.}
For $n \in \Z_+$, we will denote $[n] \eqdef \{1,\dots,n\}$.
Let $f: \R^d \to \R$ be a Lebesgue measurable function.
Given a family $\mathcal A$ of measurable subsets of $\R^d$,
we define the {\em $\mathcal{A}$-norm of $f$} by
$\| f \|_{\mathcal A} \eqdef \sup_{A\in \mathcal A} |f(A)| \;.$
We say that a set $X \subseteq \R^d$ is shattered by
$\mathcal A$ if for every $Y \subseteq X$ there exists $A\in\mathcal A$ that satisfies
$A\cap X = Y$. The \emph{VC dimension} of a family of sets $\mathcal A$ over $\R^d$
is defined to be the maximum cardinality of a subset
$X\subseteq \R^d$ that is shattered by $\mathcal A$.
If there is a shattered subset of size $s$ for all $s \in \Z_+$, then we say
that the VC dimension of ${\cal A}$ is $\infty$.

Let $f: \R^d \to \R_+$ be a probability density function. The empirical distribution $\widehat{f}_n$,
corresponding to $n$ independent samples $X_1, \ldots, X_n$ drawn from $f$, is the probability measure
defined by
$\widehat{f}_n(A) = (1/n) \cdot \sum_{i=1}^n \mathbf{1}_{X_i \in A} \;,$
for all $A \subseteq \R^d.$
The well-known \emph{Vapnik-Chervonenkis (VC) inequality} states
the following:

\begin{theorem}[VC inequality, {\cite[p.31]{DL:01}}] \label{thm:vc-inequality}
Let $f: \R^d \to \R_+$ be a probability density function and
$\widehat{f}_n$ be the empirical distribution obtained after drawing $n$ samples from $f$.
Let $\mathcal A $ be a family of subsets over $\R^d$ with VC dimension $V$.
Then,
$\E[ \|f - \widehat{f}_n\|_{\mathcal A}] \leq C \sqrt{V/n} \;,$
where $C$ is a universal constant.
\end{theorem}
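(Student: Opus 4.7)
The plan is to prove Theorem~\ref{thm:vc-inequality} via the classical pipeline of symmetrization, Sauer--Shelah, and a maximal inequality. Since the statement is due to Vapnik--Chervonenkis and is cited from Devroye--Lugosi, I would in practice just invoke it, but here is the route I would take to re-derive it.

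First, I would perform a ``ghost sample'' symmetrization. Let $X_1',\dots,X_n'$ be an independent copy of the data with empirical measure $\widehat{f}_n'$. Since $f(A) = \E[\widehat{f}_n'(A)]$ for every $A$, Jensen's inequality applied to the supremum gives
\[
\E\bigl[\|f - \widehat{f}_n\|_{\mathcal A}\bigr] \le \E\bigl[\|\widehat{f}_n' - \widehat{f}_n\|_{\mathcal A}\bigr].
\]
Exchangeability of the pairs $(X_i,X_i')$ then allows me to insert independent Rademacher signs $\sigma_i\in\{-1,+1\}$, yielding
\[
\E\bigl[\|\widehat{f}_n - \widehat{f}_n'\|_{\mathcal A}\bigr] \le 2\,\E\left[\sup_{A\in\mathcal A}\left|\frac{1}{n}\sum_{i=1}^n \sigma_i \mathbf{1}_{X_i\in A}\right|\right].
\]

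Second, I would reduce the supremum over the (possibly infinite) class $\mathcal A$ to a supremum over a finite set. Conditioned on $X_1,\dots,X_n$, the number of distinct binary vectors $(\mathbf{1}_{X_i\in A})_{i\le n}$ obtained as $A$ ranges over $\mathcal A$ is at most the growth function $\Pi_{\mathcal A}(n)$, which by the Sauer--Shelah lemma is bounded by $(en/V)^V$ for $n\ge V$.

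Third, I would control the resulting conditional supremum. The ``cheap'' approach is to apply Hoeffding's inequality to each of the $(en/V)^V$ patterns $(a_1,\dots,a_n)\in\{0,1\}^n$ and union-bound; integrating the tail gives $O\bigl(\sqrt{V\log(n/V)/n}\bigr)$, which is off from the claimed rate by a factor of $\sqrt{\log(n/V)}$. To obtain the sharp $\sqrt{V/n}$ bound I would instead use Dudley's entropy-integral chaining with the random $L_2(\widehat{f}_n)$ metric on indicators. The crucial input here is Haussler's covering number bound $N(\varepsilon,\mathcal A,L_2(\widehat{f}_n))\le K\,V\,(4e)^V\,\varepsilon^{-2V}$, whose $\sqrt{\log N}$ integral from $0$ to $1$ is $O(\sqrt{V})$ uniformly in $n$. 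Assembling the symmetrization factor with the chaining estimate gives the target bound $C\sqrt{V/n}$.

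The main obstacle is the last step: the symmetrization and Sauer--Shelah parts are essentially mechanical, but removing the spurious $\sqrt{\log n}$ factor requires the chaining / Haussler covering-number argument rather than a naive union bound. Everything else (Jensen, exchangeability, Hoeffding, Sauer--Shelah) is standard and fits together without surprises.
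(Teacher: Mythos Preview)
Your outline is correct and follows the standard route (symmetrization, Sauer--Shelah, then chaining with Haussler's covering-number bound to remove the $\sqrt{\log n}$ factor). The paper, however, does not prove Theorem~\ref{thm:vc-inequality} at all: it is stated in the Preliminaries section as a known result cited from \cite[p.~31]{DL:01} and is used as a black box in the proof of Lemma~\ref{lem:generic-vc}. So there is nothing to compare against; your remark that ``in practice I would just invoke it'' is exactly what the authors do.
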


\noindent {\bf Approximation of Convex Sets by Polytopes.}
There is a large literature on approximating convex sets by polytopes
(see, e.g., the surveys~\cite{Gruber1993, Bronstein2008}).
We will make essential use of the following theorem that
provides a volume approximation by an inscribed
polytope with a bounded number of facets:

\begin{theorem}[\cite{GMR94, GMR95}] \label{thm:convex-approximation}
Let $d \in \Z_+$.
For any convex body $K \subseteq \R^d$, and $n$ sufficiently large,
there exists a convex polytope $P \subseteq K$ with at most $n$ facets
such that $\vol\left(K \setminus P\right) \leq \frac{Cd}{n^{2/(d-1)}} \vol(K)$, where $C>0$ is a universal constant.
\end{theorem}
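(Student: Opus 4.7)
The plan is to construct the inscribed polytope $P$ explicitly by covering $\partial K$ with $n$ ``caps,'' and to use a second-order local approximation of $\partial K$ to bound the volume lost in each cap.

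First, I would reduce to the case where $K$ is $C^2$-smooth with strictly positive Gauss curvature. One can approximate any convex body from inside by such a smooth body while losing only an arbitrarily small fraction of the volume (e.g., by convolving the indicator of $K$ with a smooth mollifier and taking an appropriate sublevel set, then shrinking slightly). I would additionally apply a volume-preserving affine transformation to place $K$ in approximately isotropic position. This normalization is useful because, after it, quantities like the surface area of $K$ can be controlled in terms of $\vol(K)$ with only dimension-dependent factors, which is ultimately where the linear $d$-factor in the error bound comes from.

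Next, I would choose $n$ well-spread boundary points $p_1,\dots,p_n \in \partial K$ and perturb the tangent hyperplane at each $p_i$ inward by a small amount $\delta$, producing $n$ secant half-spaces whose intersection defines an inscribed polytope $P$ with at most $n$ facets. To bound $\vol(K \setminus P)$, note that at each $p_i$ we can write $\partial K$ locally as the graph of a convex function $x_d = \tfrac{1}{2} u^\top A_i u + O(|u|^3)$, where $A_i \succ 0$ is the second fundamental form of $\partial K$ at $p_i$. The cap of height $\delta$ at $p_i$ then has volume of order $\delta^{(d+1)/2}/\sqrt{\det A_i}$. Choosing the points so that the projected footprints of the caps tile $\partial K$ with radii of order $n^{-1/(d-1)}$ forces $\delta$ of order $n^{-2/(d-1)}$, and summing the cap volumes gives a total of order $d \cdot n^{-2/(d-1)} \vol(K)$.

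The main obstacle is making the third step rigorous: simultaneously selecting points and cap heights so that (i) the intersection of the inward half-spaces has at most $n$ facets, (ii) the resulting polytope actually lies inside $K$, and (iii) the caps cover $K \setminus P$ with a total volume of order $d \cdot n^{-2/(d-1)} \vol(K)$. This requires a Delone-type tiling of $\partial K$ with respect to the affine pseudo-metric induced by the second fundamental form, matched to the curvature integral $\int_{\partial K} \kappa^{1/(d+1)}\, d\sigma$ that governs the asymptotically optimal constant in results of this type. The careful realization of this balancing argument, together with the extension from smooth bodies to arbitrary convex bodies by a limiting procedure, is the technical content of \cite{GMR94, GMR95}.
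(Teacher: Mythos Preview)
The paper does not prove this theorem at all: it is quoted as a black-box result from \cite{GMR94, GMR95} and used only as an ingredient inside Lemma~\ref{lem:approx}. So there is no ``paper's own proof'' to compare against; any comparison is to the cited references, not to this paper.

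Your sketch is a reasonable outline of how results of this type are established, and you correctly identify the affine surface area integral $\int_{\partial K}\kappa^{1/(d+1)}\,d\sigma$ as the governing quantity in the asymptotic regime. That said, a couple of points deserve care if you ever try to make this rigorous. First, the reduction to smooth bodies with strictly positive Gauss curvature is more delicate than a one-line mollification: you need the approximation to be uniform enough that the facet count and volume deficit pass to the limit, and for general convex bodies the curvature measure can be singular. Second, the step where you assert that an isotropic normalization yields a surface-area bound contributing exactly a linear factor of $d$ is where the universal constant $C$ hides; getting $Cd$ rather than, say, $C^d$ requires a careful argument, and this is precisely what the cited papers work out. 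Your final paragraph honestly flags these as the technical content of \cite{GMR94, GMR95}, which is appropriate, but the earlier paragraphs read as if the construction is routine when in fact the balancing of cap heights against a curvature-adapted tiling is the entire difficulty.
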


\section{Proof of Theorem~\ref{thm:main}} \label{sec:result}
To prove our theorem, we will make essential use of the following general lemma,
establishing the existence of a sample-efficient estimator using the VC inequality:

\begin{lemma}[\cite{CDSS14}] \label{lem:generic-vc}
Let $\mathcal{D}$ be a family of probability density functions over $\R^d$.
Suppose there exists a family $\mathcal{A}$ of subsets of $\R^d$ with VC-dimension $V$ such that
the following holds: For any pair of densities $f_1, f_2 \in \mathcal{D}$ we have that
$ \dtv(f_1, f_2) \leq  \|f_1-f_2\|_{\mathcal{A}} + \eps/2.$
Then, there exists an agnostic learning algorithm for $\mathcal{D}$
with error guarantee $3 \cdot \opt+\eps$ that succeeds with probability $9/10$
using $O(V/\eps^2)$ samples.
\end{lemma}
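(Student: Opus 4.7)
The plan is to implement a minimum-distance-style estimator with respect to the $\mathcal{A}$-norm. I would set $n = \Theta(V/\eps^2)$, draw $X_1,\dots,X_n \sim f$, form the empirical distribution $\widehat{f}_n$, and output any $h \in \mathcal{D}$ satisfying
\[
\|h - \widehat{f}_n\|_{\mathcal{A}} \leq \inf_{h' \in \mathcal{D}} \|h' - \widehat{f}_n\|_{\mathcal{A}} + \eps/10.
\]
The lemma is purely an existential/sample-complexity statement, so I do not need this minimization to be algorithmically efficient; I only need some $h \in \mathcal{D}$ that attains (approximately) the infimum.

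The key ingredient is the VC inequality (Theorem~\ref{thm:vc-inequality}) together with Markov's inequality: choosing $n = \Theta(V/\eps^2)$ with a large enough constant, $\mathbb{E}[\|f-\widehat{f}_n\|_{\mathcal{A}}] \leq C\sqrt{V/n} \leq \eps/100$, so with probability $\geq 9/10$ we have $\|f-\widehat{f}_n\|_{\mathcal{A}} \leq \eps/10$. Condition on this event. Fix $\delta > 0$ arbitrarily small and let $g^\star \in \mathcal{D}$ satisfy $\dtv(g^\star, f) \leq \opt + \delta$. Since the $\mathcal{A}$-norm is a supremum over a subfamily of measurable sets, $\|g^\star - f\|_{\mathcal{A}} \leq \dtv(g^\star,f) \leq \opt + \delta$. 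Then a chain of triangle inequalities yields
\[
\|h - \widehat{f}_n\|_{\mathcal{A}} \;\leq\; \|g^\star-\widehat{f}_n\|_{\mathcal{A}} + \eps/10 \;\leq\; \|g^\star - f\|_{\mathcal{A}} + \|f-\widehat{f}_n\|_{\mathcal{A}} + \eps/10 \;\leq\; \opt + \delta + \eps/5,
\]
and therefore
\[
\|h - g^\star\|_{\mathcal{A}} \;\leq\; \|h-\widehat{f}_n\|_{\mathcal{A}} + \|\widehat{f}_n - f\|_{\mathcal{A}} + \|f-g^\star\|_{\mathcal{A}} \;\leq\; 2\,\opt + 2\delta + 3\eps/10.
\]

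Now I use the hypothesis of the lemma: because both $h$ and $g^\star$ lie in $\mathcal{D}$,
\[
\dtv(h,g^\star) \;\leq\; \|h-g^\star\|_{\mathcal{A}} + \eps/2 \;\leq\; 2\,\opt + 2\delta + 4\eps/5.
\]
Combining with $\dtv(g^\star,f) \leq \opt + \delta$ via one more triangle inequality,
\[
\dtv(h,f) \;\leq\; \dtv(h,g^\star) + \dtv(g^\star,f) \;\leq\; 3\,\opt + 3\delta + 4\eps/5,
\]
and letting $\delta \to 0$ gives $\dtv(h,f) \leq 3\,\opt + \eps$ as required. The sample complexity is $n = O(V/\eps^2)$, and the success probability is $\geq 9/10$ from the single Markov application. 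There is no real obstacle here beyond careful bookkeeping of constants; the only conceptually non-trivial point is that the approximation-from-$\mathcal{A}$-norm hypothesis is needed only to pass from $\|h-g^\star\|_{\mathcal{A}}$ to $\dtv(h,g^\star)$, which is exactly why the minimizer is restricted to lie in $\mathcal{D}$.
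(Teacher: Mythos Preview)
Your proof is correct and follows essentially the same approach as the paper: define a minimum-$\mathcal{A}$-distance estimator over $\mathcal{D}$, invoke the VC inequality with Markov to control $\|f-\widehat{f}_n\|_{\mathcal{A}}$, and then chain triangle inequalities, using the lemma's hypothesis once to convert $\|h-g^\star\|_{\mathcal{A}}$ into $\dtv(h,g^\star)$. The only (minor) difference is that you work with an approximate minimizer and an approximate $\opt$-attaining $g^\star$ and then let the slack $\delta\to 0$, whereas the paper simply assumes an exact minimizer $h$ and an exact $f^\ast = \arg\min_{g\in\mathcal{D}}\dtv(f,g)$ exist; your version is slightly more careful in this respect but otherwise identical in structure and constants.
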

\begin{proof}
This lemma is implicit in~\cite{CDSS14},
and we include a proof here for completeness. The estimator is extremely simple
and its correctness relies on the VC inequality:
\begin{enumerate}
\item[(1)] Draw $n =O(V/\eps^2)$ samples from $f$;

\item[(2)] Output the density $h \in \mathcal{D}$ that minimizes\footnote{It is straightforward
that it suffices to solve this optimization problem up to an additive $O(\eps)$ error.} the objective function $\|g-\widehat{f}_n\|_{\mathcal{A}}$
over $g \in \mathcal{D}$.
\end{enumerate}
We now show that the above estimator is an agnostic learning algorithm for $\mathcal{D}$.
Let $f^{\ast} = \mathrm{argmin} \{ \dtv(f, g) \mid g \in \mathcal{D} \}$, i.e., $\opt = \dtv(f, f^{\ast})$.
Note that for any pair of densities $f_1, f_2$ and any collection of subsets $\mathcal{A}$
we have that $ \|f_1 - f_2\|_{\mathcal{A}} \leq \dtv(f_1, f_2)$. By Theorem~\ref{thm:vc-inequality} and Markov's inequality,
it follows that with probability at least $9/10$ over the samples drawn from $f$ we have that
$$ \|f-\widehat{f}_n\|_{\mathcal{A}}  \leq \eps/4 \;.$$
Conditioning on this event, we have that
\begin{align*}
\dtv(h, f) & \leq \dtv(f, f^{\ast}) + \dtv(f^{\ast}, h)  \\
&\leq \opt +  \|f^{\ast}-h\|_{\mathcal{A}} + \eps/2  \tag{since $f^{\ast}, h \in \mathcal{D}$} \\
&\leq \opt +  \|f^{\ast}-\widehat{f}_n\|_{\mathcal{A}} + \|h-\widehat{f}_n\|_{\mathcal{A}} +  \eps/2   \\
&\leq \opt +  2 \cdot \|f^{\ast}-\widehat{f}_n\|_{\mathcal{A}} +  \eps/2 \tag{since $\|h-\widehat{f}_n\|_{\mathcal{A}} \leq \|f^{\ast}-\widehat{f}_n\|_{\mathcal{A}}$}\\
&\leq \opt +  2 \cdot \|f^{\ast}-f\|_{\mathcal{A}} + 2 \cdot  \|f-\widehat{f}_n\|_{\mathcal{A}}  +  \eps/2 \\
&\leq \opt +  2 \cdot \dtv(f^{\ast}, f) + 2 \cdot  \|f-\widehat{f}_n\|_{\mathcal{A}}  +  \eps/2 \\
&\leq \opt +  2 \cdot \opt + 2 \cdot \eps/4  +  \eps/2\\
& = 3 \opt+\eps \;.
\end{align*}
This completes the proof of the lemma.
\end{proof}

In view of Lemma~\ref{lem:generic-vc}, to prove Theorem~\ref{thm:main}
we establish the following:

\begin{proposition}\label{prop:main}
There exists a family $\mathcal{A}$ of sets in $\R^d$
whose VC dimension is at most
$V = O(d/\eps)^{(d+1)/2}\log^2(1/\eps)$ that satisfies the condition of Lemma~\ref{lem:generic-vc} for  $\mathcal{F}_d$,
i.e., for any pair of densities $f_1, f_2 \in \mathcal{F}_d$ it holds that
$\dtv(f_1, f_2) \leq  \|f_1-f_2\|_{\mathcal{A}}+\eps/2.$
\end{proposition}

Lemma~\ref{lem:generic-vc} and Proposition~\ref{prop:main} together imply that there exists
an agnostic learner for  $\mathcal{F}_d$ with sample complexity
$$O(V/\eps^2) = O\left(d/\eps\right)^{(d+5)/2} \log^2(1/\eps)  \;,$$
which gives Theorem~\ref{thm:main}.
The main part of this section is devoted to the proof of Proposition~\ref{prop:main}.

\medskip

\noindent {\bf \em Proof Overview:}
The proof has two main steps.
In the first step, we define an appropriately structured
family of functions $\mathcal{C}_{d, \eps}$ so that an arbitrary log-concave density
$f \in \mathcal{F}_d$ can be $\eps$-approximated by a function $g \in \mathcal{C}_{d, \eps}$.
More specifically, each function $g \in \mathcal{C}_{d, \eps}$ takes at most $L = O_d((1/\eps) \log(1/\eps))$ distinct values,
and for each $y \geq 0$, the  sets $g^{-1}([y, \infty))$ are a union of intersections of $H = O_d(\eps^{-(d-1)/2})$ many halfspaces.
We then produce a family $\mathcal{A}_{d,\eps}$ of sets so that for $f,g\in \mathcal{C}_{d, \eps}$, $\dtv(f,g)=\|f-g\|_{\mathcal{A}_{d, \eps}}$ and so that the
VC dimension of $\mathcal{A}_{d, \eps}$ is $\tilde O( d \cdot L \cdot H)$, which yields the desired result.
We proceed with the details below.

\medskip
\noindent {\bf \em Proof of Proposition~\ref{prop:main}:}
We start by formally defining the family of functions  $\mathcal{C}_{d, \eps}$:
\begin{definition}
Given $\eps >0,$ let $\mathcal{C}_{d, \eps}$ be the set of all functions $g: \R^d \to \R$ of the following form:
\begin{itemize}
\item We set $L = L(d, \eps) \eqdef \Theta((d/\eps) \log(d/\eps)).$

\item For $i \in [L]$, let $y_i >0$ and $P_i$ be an intersection of $H \eqdef \Theta(d/\eps)^{(d-1)/2} $ halfspaces in $\R^d$.

\item Given $\{(y_i, P_i)\}_{i=1}^L$, we
define the function $g$ by
\begin{equation} \label{eqn:g}
g(x) =\left\{
	\begin{array}{ll}
		 \max \left\{ y_i \mid i \in [L]: x \in P_i \right\}  & \mbox{if } x \in \cup_{j=1}^L P_j \\
		0 & \mbox{if } x \notin \cup_{j=1}^L P_j \;.
	\end{array}
\right.
\end{equation}
\end{itemize}
Furthermore, we assume that the asymptotic constants used in defining $L$ and $H$ are sufficiently large.
\end{definition}


We are now ready to state and prove our first important lemma:
\begin{lemma} \label{lem:approx}
For any $f \in \mathcal{F}_d$, and any $\eps>0$, there exists $g \in \mathcal{C}_{d, \eps}$ so that
$\|f-g\|_1 = O(\eps)$.
\end{lemma}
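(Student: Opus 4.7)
The plan is to build $g$ in two stages. First, using that each super-level set $S_t := \{x \in \R^d : f(x) \geq t\}$ is convex (because $f = e^\phi$ with $\phi$ concave), I would replace $f$ by a step function $h$ taking at most $L$ distinct positive values $y_1 > y_2 > \cdots > y_L$ whose super-level sets $\{h \geq y_i\}$ are exactly $S_{y_i}$. Second, using Theorem~\ref{thm:convex-approximation}, I would approximate each $S_{y_i}$ from inside by a polytope $P_i$ with $H$ facets and take $g(x) = \max\{y_i : x \in P_i\}$.

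For the value quantization, set $M := \sup f$. By standard tail estimates for log-concave densities, for some $T = \Theta(d\log d + \log(1/\eps))$ we have $\int_{\{f < M e^{-T}\}} f\,dx \leq \eps/10$, so it suffices to quantize $f$ on the slab $\{Me^{-T} \leq f \leq M\}$. Choose levels $M = y_1 > \cdots > y_L \geq M e^{-T}$ and set $h(x) = \max\{y_i : f(x) \geq y_i\}$, with $h(x) = 0$ if $f(x) < y_L$. To get $\|f - h\|_1 \leq \eps/4$ within $L = O((1/\eps)\log(1/\eps) + d\log d)$ levels, I would choose the $y_i$ non-uniformly, using refined log-concave tail estimates to put a fine relative step of $\Theta(\eps)$ where $f$ has the bulk of its mass and a coarser (essentially dyadic) step in the outer tail, whose total $f$-mass is already $O(\eps)$.

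For the polytope step, apply Theorem~\ref{thm:convex-approximation} to each $K_i := S_{y_i}$ to extract an inscribed polytope $P_i \subseteq K_i$ with at most $H = \Theta(d/\eps)^{(d-1)/2}$ facets and $\vol(K_i \setminus P_i) \leq C\eps \cdot \vol(K_i)$. Setting $g(x) := \max\{y_i : x \in P_i\}$ (and $g(x) := 0$ off $\bigcup_i P_i$) places $g$ in $\mathcal{C}_{d,\eps}$, and $P_i \subseteq S_{y_i}$ forces $g \leq h$ pointwise. With the convention $y_{L+1} := 0$, the layer-cake formula gives
\begin{align*}
\|h - g\|_1
&= \sum_{i=1}^L (y_i - y_{i+1}) \cdot \vol\!\left(S_{y_i} \setminus \bigcup_{j \leq i} P_j\right) \\
&\leq \sum_{i=1}^L (y_i - y_{i+1})\,\vol(S_{y_i} \setminus P_i) \leq C\eps \sum_{i=1}^L (y_i - y_{i+1})\,\vol(S_{y_i}) = C\eps \int h\,dx \leq C\eps,
\end{align*}
where the $i = L$ boundary term is absorbed using $y_L \vol(S_{y_L}) \leq \int_{S_{y_L}} f\,dx \leq 1$. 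Combining with the quantization bound yields $\|f - g\|_1 \leq \|f - h\|_1 + \|h - g\|_1 = O(\eps)$, as required.

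The main technical obstacle lies in Step 1: matching the prescribed $L = O((1/\eps)\log(1/\eps) + d\log d)$ rather than the naive $L = O((d + \log(1/\eps))/\eps)$ that a uniform geometric grid over the full slab $[Me^{-T}, M]$ would produce. Overcoming this requires a non-uniform discretization that exploits log-concave concentration to coarsely quantize the low-mass tail while keeping fine resolution near the mode. The rest of the argument --- convexity of the super-level sets, the inscribed-polytope bound of Theorem~\ref{thm:convex-approximation}, and the layer-cake telescoping that makes the polytope error automatically telescope against $\int h\,dx \leq 1$ --- is routine once the levels are chosen.
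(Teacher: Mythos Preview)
Your two-stage plan (value quantization, then inscribed-polytope approximation of each super-level set, then $g(x)=\max\{y_i:x\in P_i\}$) is exactly the paper's construction, and your layer-cake computation for $\|h-g\|_1$ is the clean analogue of the paper's Claims~\ref{claim:conc} and~\ref{claim:volume-lb}. Introducing the intermediate step function $h$ explicitly is a nice way to organize the argument; the paper instead compares $g$ to $f$ directly by showing $g\le f$ pointwise and $\int g\ge 1-O(\eps)$, but the content is the same.

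The one place you diverge is the level selection, and here you are over-engineering. The paper does \emph{not} use a non-uniform discretization: it simply takes the uniform geometric grid $y_i=M_f(1-\eps)^i$, $i\in[L]$, and proves the tail bound $\int_0^{y_{L-1}}\vol(L_f(y))\,dy\le\eps$ directly from the log-concave estimate $\vol(L_f(M_fe^{-z}))=O(z^d/M_f)$ (your $T=\Theta(d\log d+\log(1/\eps))$ is exactly the depth needed). The resulting number of levels is precisely the ``naive'' $L=\Theta\!\big((d\log d+\log(1/\eps))/\eps\big)$ you compute, and this is what the proof actually uses; any apparent discrepancy with the stated $L=\Theta((1/\eps)\log(1/\eps)+d\log d)$ is a harmless imprecision that is absorbed into the final $O(d/\eps)^{(d+1)/2}\log^2(1/\eps)$ VC bound. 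So the ``main technical obstacle'' you flag is not an obstacle at all: drop the non-uniform scheme, use the uniform grid, and your proof goes through essentially verbatim.
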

\begin{proof}
For $y \in \R_+$ and a function $f: \R^d \to \R_+$
we will denote by
$$L_f(y) \eqdef \{x \in \R^d \mid f(x) \geq y\}$$
its superlevel sets.
We note that, since $f$ is log-concave, $L_f(y)$ is a convex set for all $y \in \R_+$.

We define the desired approximation in a natural way,
by constructing appropriate polyhedral approximations to the superlevel sets
$L_f(y)$ for a finite set of $y$'s in a geometric series with ratio $(1+\eps)$.
Concretely, given $f \in \mathcal{F}_d$ and $\eps>0$,
we define the function $g \in \mathcal{C}_{d, \eps}$ as follows:
For $i \in [L]$, we set $y_i \eqdef M_f \cdot (1-\eps)^i$, where
$M_f$ will denote the maximum value of $f$.
We then consider the collection of convex sets $L_f(y_i)$, $i \in [L]$, and apply
Theorem~\ref{thm:convex-approximation} to approximate each such set by a polytope
with an appropriate number of facets. For each $i \in [L]$, Theorem~\ref{thm:convex-approximation},
applied for $n =O(d/\eps)^{(d-1)/2}$,
prescribes that there exists a polytope, $P_i$,
that is the intersection of $H = O(d/\eps)^{(d-1)/2}$ many halfspaces in $\R^d$,
so that:
\begin{itemize}
\item[(i)] $P_i \subseteq L_f(y_i)$, and
\item[(ii)] $\vol(P_i) \geq \vol\left(L_f(y_i)\right) \cdot (1-\eps).$
\end{itemize}
This defines our function $g.$ It remains to prove that $\|f-g\|_1 = O(\eps)$.

We first point out that, by the definition of $g$, we have that $f(x) \geq g(x)$ for all $x \in \R^d.$
{This is because, if $g(x)=y_i$, it must be the case that $x\in P_i \subseteq L_f(y_i)$, by condition (i) above, and therefore $f(x)\geq y_i = g(x)$.}
So, to prove the lemma, it suffices to show that $\int_{\R^d} g(x)dx = 1 - O(\eps)$.
We start by noting that
$$1= \int_{\R^d} f(x) dx = \vol\left(\left\{ (x,y) \in \R^{d+1} \mid 0\leq y \leq f(x) \right\} \right) =\int_{\R_+} \vol \left(L_f(y)\right) dy.$$
Similarly, if we denote $L_g (y) = \{x \in \R^d \mid g(x) \geq y\}$, we have that
$$\int_{\R^d} g(x)dx = \int_{\R_+} \vol\left(L_g(y)\right)dy \;.$$
The following claim establishes that the contribution to $\int_{\R^d} f(x) dx$
from the points $x \in \R^d$ with $f(x) \leq y_{L-1}$ is small:
\begin{claim} \label{claim:conc}
It holds that $\int_{0}^{y_{L-1}} \vol\left(L_f(y)\right) dy \leq \eps.$
\end{claim}
\begin{proof}
We assume without loss of generality that $f$ attains its maximum value, $M_f$,
at $x=\mathbf{0}$. Let $R=L_f \left(\frac{M_f}{e}\right)$.
Notice that
$$1 =  \int_{\R_+} \vol \left(L_f(y)\right) dy \geq \int_{0 \leq y \leq M_f/e} \vol \left(L_f(y)\right) dy \geq  \int_{0 \leq y \leq M_f/e} \vol \left(R\right) dy
 = \frac{M_f}{e} \cdot  \vol \left(R\right) \;,$$
where we used the fact that $R \subseteq L_f(y)$ since $y \leq M_f/e.$
Hence, we have that
$$\vol(R) \leq e/M_f \;.$$
Moreover, we claim that, for $z \geq 1$, by the log-concavity of $f$
we have that
$$L_f(M_f e^{-z}) \subseteq zR \;.$$
Indeed, for $f(x)\geq M_f e^{-z}$, then $f(x/z) \geq f(0)^{(z-1)/z}f(x)^{1/z} \geq M_f/e$.
Therefore, we have that $x/z\in R$ or equivalently $x\in zR.$
Hence,
\begin{equation}\label{eqn:lc-vol-ub}
\vol\left(L_f(M_fe^{-z})\right)\leq O(z^d/M_f).
\end{equation}
Recall that, by our definition of $L$,
if we choose sufficiently large asymptotic constants,
it holds $y_{L-1}\leq \delta M_f$ for $\delta = \eps^2 / O(d)^{2d}.$
We now have the following sequence of inequalities:
\begin{align*}
\int_0^{y_{L-1}} \vol\left(L_f(y)\right)dy & = \\
&= \int_{\ln(M_f/y_{L-1})}^{\infty} \vol\left(L_f(M_fe^{-z})\right) M_f e^{-z} dz \tag{by the change of variable $y = M_f e^{-z}$} \\
& \leq \int_{\ln(1/\delta)}^\infty O(z^d e^{-z}) dz \tag{by (\ref{eqn:lc-vol-ub}) and the assumption $M_f/y_{L-1} \geq 1/\delta$}\\
& \leq \int_{\ln(1/\delta)}^\infty O(d)^d e^{-z/2}dz \tag{since $e^{z/2} \geq (z/2)^d/d!$} \\
& = O(d)^d \delta^{1/2} \\
& \leq \eps \;. \tag{using the definition of $\delta$}
\end{align*}
This completes the proof of Claim~\ref{claim:conc}.
\end{proof}

We now establish the following crucial claim:
\begin{claim}  \label{claim:volume-lb}
For $y_L \leq y \leq y_1$, we have that
$\vol\left(L_g(y)\right) \geq (1-\eps) \vol \left(L_f\left(\frac{y}{1-\eps}\right)\right).$
\end{claim}
\begin{proof}
Recall that $y_i \eqdef M_f (1-\eps)^i$, $i \in [L]$.
Since $y_1 > y_2 > \ldots > y_L,$ we can equivalently write (\ref{eqn:g}) as follows:
\begin{equation} \label{eqn:g2}
g(x) =\left\{
	\begin{array}{ll}
		 y_i,  \mbox{ where } i = \min\{j \in [L]: x \in P_j \}  & \mbox{if } x \in \cup_{j=1}^L P_j \\
		0 & \mbox{if } x \notin \cup_{j=1}^L P_j \;.
	\end{array}
\right.
\end{equation}
We claim that $L_g(y_i) = \bigcup_{1 \leq j \leq i} P_j$, $i \in [L]$.
Indeed, we can write
$$L_g(y_i) = \{ x \in \R^d \mid g(x) \geq y_i \} =   \bigcup_{1 \leq j \leq i}  \{ x \in \R^d \mid g(x) = y_j \}
=  \bigcup_{1 \leq j \leq i} \left(P_j \setminus \cup_{k<j} P_k\right)
=   \bigcup_{1 \leq j \leq i} P_j \;,$$
where the second and third equalities follow from (\ref{eqn:g2}).

For $y = y_1$, we thus have that
$$\vol(L_g(y_1)) = \vol(P_{1}) \geq (1-\eps) \vol(L_f(y_{1})) \geq  (1-\eps) \vol\left(L_f\left(\frac{y_1}{1-\eps}\right)\right) \;,$$
where the first inequality is implied by (ii), and the second inequality follows from
the fact $L_f(y) \supseteq L_f(y')$ whenever $y \leq y'$.

For $y_L \leq y < y_1$, consider the index $i \in [L-1]$ such that $y_{i+1} \leq y < y_{i} = \frac{y_{i+1}}{1-\eps}$.
By definition, we have that
$$L_g(y_{i}) \subseteq L_g(y) \subseteq L_g(y_{i+1}) \;.$$
Recalling that $L_g(y_i) = \bigcup_{1 \leq j \leq i} P_j$,
we obtain $L_g(y) \supseteq P_{i}$, and therefore
$$\vol(L_g(y)) \geq \vol(P_{i}) \geq (1-\eps) \vol(L_f(y_{i})) =  (1-\eps) \vol\left(L_f\left(\frac{y_{i+1}}{1-\eps}\right)\right)
\geq  (1-\eps) \vol\left(L_f\left(\frac{y}{1-\eps}\right)\right) \;,$$
where the second inequality is implied by (ii) and the third inequality uses the fact that $y \geq y_{i+1}$
and the fact $L_f(y) \supseteq L_f(y')$ whenever $y \leq y'$.
This completes the proof of Claim~\ref{claim:volume-lb}.
\end{proof}

We are now ready to complete the proof.
We have the following:
\begin{align*}
\int_{\R^d} g(x)dx
&= \int_{y_L}^{y_1} \vol \left( L_g (y)\right) dy \\
&\geq (1-\eps) \int_{y_L}^{y_1} \vol  \left(L_f \left(\frac{y}{1-\eps}\right)\right) dy \tag{by Claim~\ref{claim:volume-lb}} \\
&=  (1-\eps)^2 \cdot  \int_{y_L/(1-\eps)}^{M_f} \vol\left(L_f(y')\right)dy' \\
&=  (1-\eps)^2 \cdot \left( \int_{0}^{M_f} \vol\left(L_f(y)\right)dy  -  \int_{0}^{y_{L-1}} \vol\left(L_f(y)\right)dy \right) \\
& \geq   (1-\eps)^2 \cdot \left( \int_{\R^d} f(x)dx  - \eps \right)  \tag{by Claim~\ref{claim:conc}} \\
&=  (1-\eps)^2 \cdot (1-\eps) \\
&= 1-O(\eps) \;.
\end{align*}
The proof of Lemma~\ref{lem:approx} is now complete.
\end{proof}

We now proceed to define the family of subsets $\mathcal{A}$ and bound from above
its VC dimension. In particular, we define $\mathcal{A}$ to be the family of sets that exactly
express the differences between two elements of $\mathcal{C}_{d, \eps}$:
\begin{definition}
Define the family $\mathcal{A}_{d,\epsilon}$ of sets in $\R^d$ to be the collection of all sets of the form
$\{x\in \R^d: g(x) \geq g'(x)\}$
for some $g,g'\in \mathcal{C}_{d,\epsilon}$.
Notice that if $g, g' \in \mathcal{C}_{d,\epsilon}$ then
$\dtv(g,g') = \|g-g'\|_{\mathcal{A}_{d,\epsilon}} \;.$
\end{definition}

We show the following lemma:

\begin{lemma} \label{lem:family-A}
The VC dimension of $\mathcal{A}_{d,\epsilon}$ is at most $O(d/\eps)^{(d+1)/2}\log^2(1/\eps)$.
Furthermore, for $f, f'\in \mathcal{F}_d$, and $c>0$ is a sufficiently small constant,
we have that $\dtv(f,f')\leq \|f-f'\|_{\mathcal{A}_{d,c\eps}}+\eps/2$.
\end{lemma}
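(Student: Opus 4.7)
The proof splits cleanly into two independent parts, and I would handle them in order.

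\textbf{VC-dimension bound.} The key observation is that each set in $\mathcal{A}_{d,\eps}$ is a Boolean combination of a controlled number of halfspaces. Indeed, for any $g \in \mathcal{C}_{d,\eps}$, the value $g(x) = \max\{y_i : x \in P_i\}$ depends only on the sign pattern of $x$ with respect to the $LH$ halfspaces defining the polytopes $P_1,\ldots,P_L$; hence $g$ is piecewise constant on the cells of this arrangement. Consequently, for any $g, g' \in \mathcal{C}_{d,\eps}$, the set $\{x : g(x) \geq g'(x)\}$ is a union of cells in the arrangement of the combined $2LH$ halfspaces, i.e., a Boolean combination of at most $2LH$ halfspaces in $\R^d$. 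Applying the standard VC closure bound (Boolean combinations of $k$ sets from a base class with VC dimension $V$ have VC dimension $O(kV\log k)$) with $V = d+1$ for halfspaces, and plugging in $L = O((1/\eps)\log(1/\eps) + d\log d)$ and $H = O(d/\eps)^{(d-1)/2}$, yields a VC-dimension bound of $O(LH \cdot d \log(LH)) = O(d/\eps)^{(d+1)/2}\log^2(1/\eps)$, after absorbing polynomial-in-$d$ factors into the $(d/\eps)^{(d+1)/2}$ term.

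\textbf{Total variation bound.} This is a triangle-inequality argument that leverages Lemma~\ref{lem:approx}. First I invoke Lemma~\ref{lem:approx} at accuracy $c\eps$ on both inputs $f, f' \in \mathcal{F}_d$, obtaining approximators $g, g' \in \mathcal{C}_{d, c\eps}$ with $\|f - g\|_1 = O(c\eps)$ and $\|f' - g'\|_1 = O(c\eps)$. By the triangle inequality for total variation,
\[ \dtv(f, f') \leq \dtv(f, g) + \dtv(g, g') + \dtv(g', f') = \dtv(g, g') + O(c\eps). \]
By the definition of $\mathcal{A}_{d, c\eps}$, $\dtv(g, g') = \|g - g'\|_{\mathcal{A}_{d, c\eps}}$. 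Combining with the elementary inequality $\|\cdot\|_{\mathcal{A}} \leq \|\cdot\|_1$ and the triangle inequality for the $\mathcal{A}$-norm gives
\[ \|g - g'\|_{\mathcal{A}_{d,c\eps}} \leq \|f - f'\|_{\mathcal{A}_{d, c\eps}} + \|f - g\|_1 + \|f' - g'\|_1 = \|f - f'\|_{\mathcal{A}_{d, c\eps}} + O(c\eps). \]
Choosing the constant $c > 0$ small enough so that the cumulative $O(c\eps)$ slack is at most $\eps/2$ finishes the proof.

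\textbf{Main obstacle.} The nontrivial ingredient is the VC-dimension estimate, specifically the observation that $\mathcal{A}_{d,\eps}$ embeds into the class of Boolean combinations of $2LH$ halfspaces (rather than a much larger parametric family) and the careful bookkeeping of polylogarithmic factors needed to match the stated bound. The total variation inequality is essentially a routine triangle-inequality chase once Lemma~\ref{lem:approx} is in place.
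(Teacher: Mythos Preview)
Your total variation argument is correct and matches the paper's proof essentially verbatim. The VC-dimension argument, however, has a real gap.

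The step that fails is the invocation of the ``standard VC closure bound'' for Boolean combinations. The bound $O(kV\log k)$ you cite holds for the class $\{\phi(H_1,\ldots,H_k):H_i\in\mathcal{H}\}$ when the Boolean formula $\phi$ is \emph{fixed} (or ranges over a family of polynomial size). It does \emph{not} hold for the class of all Boolean combinations of $k$ sets from $\mathcal{H}$ with an arbitrary formula. For instance, take $k$ hyperplanes in general position in $\R^d$: the arrangement has $\Theta(k^d)$ cells, each with a distinct sign pattern, and by placing one point per cell and choosing $\phi$ to be the indicator of any desired subset of sign patterns, you shatter $\Theta(k^d)$ points. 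So the VC dimension of ``arbitrary Boolean combinations of $k$ halfspaces'' is $\Theta(k^d)$, not $O(kd\log k)$. With $k=2LH=\tilde{O}((d/\eps)^{(d+1)/2})$, this would give a bound of order $(d/\eps)^{d(d+1)/2}$, far worse than claimed.

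What your argument is missing --- and what the paper supplies --- is the observation that the Boolean formula expressing $\{x:g(x)\geq g'(x)\}$ in terms of the $2LH$ halfspaces is not arbitrary: it is completely determined by the relative ordering of the $2L$ scalars $y_1,\ldots,y_L,y'_1,\ldots,y'_L$. Hence only $(2L)!$ distinct formulas can occur. Once you have this, the shatter-function bound becomes $(2L)!\cdot O(n)^{2dLH}$, and solving $2^n\leq (2L)!\cdot O(n)^{2dLH}$ gives the stated VC dimension. Adding this single observation (and the brief justification for it) repairs your proof and makes it coincide with the paper's.
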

\begin{proof}
Note that a $g\in \mathcal{C}_{d,\epsilon}$ is determined completely
by $L=O((d/\eps)\log(d/\eps))$ values $y_i$ and
$LH = O(d/\eps)^{(d+1)/2}\log(1/\eps)$ halfspaces used to define
the $L$ convex polytopes $P_i$.
{We will show} that if $g' \in \mathcal{C}_{d,\epsilon}$ is
defined by $L$ values $y'_i$ and another set of $LH$ halfspaces,
and if $x\in \R^d$, then it is possible to determine whether or not $g(x)\geq g'(x)$ based solely on:
\begin{itemize}
\item The relative ordering of the $y_i$ and $y'_i$.
\item Which of the $2LH$ halfspaces $x$ belongs to.
\end{itemize}
Now consider an arbitrary set $T$ of $n$ points in $\R^d$.
We wish to bound the number of possible distinct sets
that can be obtained by the intersection of $T$
with a set in $\mathcal{A}_{d,\eps}$.
By the above, the intersection will be determined by:
\begin{itemize}
\item The relative ordering of the $2L$ elements given by the $y_i$ and $y'_i$.
\item The intersections of each of the $2LH$ halfspaces defining $g$ and $g'$ with $T$.
\end{itemize}
Note that the number of orderings in question is at most $(2L)!$.
{
Formally, we can write $P_i = \bigcap_{j=1}^H P_{i,j}$ for halfspaces $P_{i,j}$,
and similarly $P'_i = \bigcap_{j=1}^H P'_{i,j}$,
where $P_i$ and $P'_i$ appear in the definition of $g$ and $g'$ respectively.
We have the following:
\begin{claim}  \label{claim:set-functions}
There exist at most $(2L)!$ different $2L$-ary set functions $F_k$
such that for any $g,g' \in \mathcal{C}_{d,\epsilon}$
the set $\{x: g(x) \geq g'(x)\}$ is given by
$F_k(P_{1,1}, \dots, P_{L,H}, P'_{1,1}, \dots, P'_{L,H})$ for some $k$.
Furthermore, these functions are distributive over intersection,
i.e., for all $k$ and $T, S_1,\dots, S_{2LH} \subseteq \R^d$,
we have that $F_k(S_1,\dots, S_{2LH}) \cap T= F_k(S_1 \cap T, \dots, S_{2LH} \cap T)$.
\end{claim}
\begin{proof}
Note that for a given $x \in \R^d$, we have that $g(x) \geq g'(x)$
if and only if there is an $i$ such that $x \in P_i$
and for all $i'$ with $y'_{i'} \geq y_i$
we have $x \notin P'_{i'}$.
That is,
$\{x : g(x) \geq g'(x)\} = \bigcup_{i=1}^L \left( P_i \setminus \bigcup_{i': y'_{i'} > y_i} P'_{i'} \right)$.
In terms of halfspaces, this can be equivalently written as follows:
$$\{x : g(x) \geq g'(x)\} = \bigcup_{i=1}^L \left( \bigcap_{j=1}^H P_{i,j}  \setminus \bigcup_{i': y'_{i'} > y_i} \bigcap_{j=1}^H P'_{i',j} \right) \;.$$
Note that, viewed as a function of the halfspaces, the above expression
only depends on the relative ordering of the $y_i$ and $y'_i$.
Thus, we can express this as one of at most $(2L)!$ functions of these halfspaces.

Since these functions are defined using only unions,
intersections and differences (which all distribute over intersections),
so do the $F_k$.
\end{proof}

It is well-known that for any halfspace  the number of possible intersections
with a set $T$ of size $n$ is at most $O(n)^d$.
By Claim~\ref{claim:set-functions}, for any $A \in \mathcal{A}_{d,\eps}$
we have that $A \cap T = F_k(S_1 \cap T,\dots, S_{2LH} \cap T)$ for halfspaces
$S_1, \dots , S_{2LH}$.
There are $(O(n)^d)^{2LH}$ different $2LH$-tuples of intersections of halfspaces with $T$
and at most $(2L)!$ different $F_k$.
} Therefore, the number of possible intersections of an element of $\mathcal{A}_{d,\eps}$ with $T$ is at most
\begin{equation} \label{eqn:intersections}
(2L)!O(n)^{2dLH} = \exp(O(d/\eps)^{(d+1)/2}\log(1/\eps)\log(n)) \;.
\end{equation}
On the other hand, if $\mathcal{A}_{d,\eps}$ has VC dimension $n$, (\ref{eqn:intersections})
must be at least $2^n$. Therefore, if $n$ is the VC-dimension of $\mathcal{A}_{d,\eps}$, we have that
$$n/\log(n) = O(d/\eps)^{(d+1)/2}\log(1/\eps) \;,$$
and therefore,
$$n = O(d/\eps)^{(d+1)/2}\log^2(1/\eps) \;.$$
For the claim comparing the variation distance to $\| \cdot \|_{\mathcal{A}_{d,c\eps}}$, 
note that, by Lemma \ref{lem:approx}, if $c$ is chosen to be sufficiently small,
there exist $g,g'\in\mathcal{C}_{d,c\eps}$
so that $\dtv(f,g),\dtv(f',g') \leq \eps/8.$ We then have that
\begin{align*}
\dtv(f,f') &\leq \dtv(f,g)+\dtv(f',g')+\dtv(g,g')\\
&\leq \eps/4 + \|g-g'\|_{\mathcal{A}_{d,c\eps}}\\
& \leq \eps/4+\|f-f'\|_{\mathcal{A}_{d,c\eps}}+\dtv(f,g)+\dtv(f',g') \\
& \leq \|f-f'\|_{\mathcal{A}_{d,c\eps}}+\eps/2 \;.
\end{align*}
This completes the proof of Lemma~\ref{lem:family-A}.
\end{proof}

\noindent The proof of Proposition~\ref{prop:main} and Theorem~\ref{thm:main} is now complete.

\section{Conclusions} \label{sec:concl}
In this paper, we gave the first sample complexity upper bound for learning log-concave
densities on $\R^d$. Our upper bound agrees with the previously known lower bound up to a multiplicative factor of
$\tilde O_d (\eps^{-2})$.
No sample complexity upper bound was previously known for this problem for any $d>3$.

Our result is a step towards understanding the learnability of log-concave densities
in multiple dimensions.
A number of interesting open problems remain. 
We outline the two immediate ones here:

\begin{itemize}
\item What is the {\em optimal} sample complexity of log-concave density estimation?
It is a plausible conjecture that the correct answer, under the total variation distance, 
is $\Theta_d \left( (1/\eps)^{d/2+2} \right)$. We believe that a more sophisticated
version of our structural approximation results could give such an upper bound. 
On the other hand, it seems likely that 
an adaptation of the construction in~\cite{KimSam16} could yield a matching lower bound.

\item Is there a {\em polynomial time algorithm}
(as a function of the sample complexity) to learn log-concave densities on $\R^d$?
The estimator underlying this work (Lemma~\ref{lem:generic-vc}) 
has been previously exploited~\cite{CDSS13, CDSS14, ADLS17} 
to obtain computationally efficient learning algorithms for $d=1$ -- 
in fact, running in sample near-linear time~\cite{ADLS17}.
Obtaining a computationally efficient algorithm for the case of
general dimension is a challenging and important open question.
\end{itemize}


\bibliographystyle{alpha}
\bibliography{allrefs}

\end{document}